\def\vol{\mathrm{vol}}
\def\cost{\mathrm{cost}}
\def\R{\mathbb{R}}
\def\our{OneFlow}
\def\ourdet{OneFlow-Gen}
\def\flow{LL-Flow}
\def\flowdet{LL-Flow-Gen}
\newtheorem{observation}{Observation}
\theoremstyle{remark}
\newtheorem{problem}{Problem}
\newtheorem{remark}{Remark}
\begin{document}

\title{OneFlow: One-class flow for anomaly detection based on a minimal volume region}

\author{Łukasz Maziarka, 
        Marek Śmieja, 
        Marcin Sendera, 
        Łukasz Struski,
        Jacek Tabor,
        Przemysław Spurek
\IEEEcompsocitemizethanks{\IEEEcompsocthanksitem Ł. Maziarka, M. Śmieja, M. Sendera, Ł. Struski, J.Tabor, P. Spurek are  with  the  Faculty of Mathematics and Computer Science, Jagiellonian University, Łojasiewicza 6,30-348 Kraków, Poland.\protect\\
E-mail: \href{mailto:lukasz.maziarka@ii.uj.edu.pl}{lukasz.maziarka@ii.uj.edu.pl}, \href{mailto:marek.smieja@uj.edu.pl}{marek.smieja@uj.edu.pl}}
\thanks{Manuscript received April 19, 2005; revised August 26, 2015.}}

\markboth{Journal of \LaTeX\ Class Files,~Vol.~14, No.~8, August~2015}%
{Shell \MakeLowercase{\textit{et al.}}: Bare Demo of IEEEtran.cls for Computer Society Journals}

\IEEEtitleabstractindextext{%
\begin{abstract}
We propose OneFlow -- a flow-based one-class classifier for anomaly (outlier) detection that finds a minimal volume bounding region. Contrary to density-based methods, OneFlow is constructed in such a way that its result typically does not depend on the structure of outliers. This is caused by the fact that during training the gradient of the cost function is propagated only over the points located near to the decision boundary (behavior similar to the support vectors in SVM).
The combination of flow models and a Bernstein quantile estimator allows OneFlow to find a parametric form of bounding region, which can be useful in various applications including describing shapes from 3D point clouds. Experiments show that the proposed model outperforms related methods on real-world anomaly detection problems.

\end{abstract}

\begin{IEEEkeywords}
Anomaly detection, outlier detection, normalizing flows.
\end{IEEEkeywords}}

\maketitle

\IEEEdisplaynontitleabstractindextext

\IEEEpeerreviewmaketitle

\IEEEraisesectionheading{\section{Introduction}\label{sec:introduction}}

\IEEEPARstart{A}{nomaly} (novelty/outlier) detection refers to the identification of abnormal or novel patterns embedded in a large amount of (nominal) data \cite{miljkovic2010review}. The goal of anomaly detection is to identify unusual system behaviors, which are not consistent with its typical state. Anomaly detection algorithms find application in fraud detection \cite{phua2010comprehensive}, discovering failures in industrial domain \cite{lavin2015evaluating}, detection of adversarial examples \cite{roth2019odds}, etc. \cite{garcia2009anomaly, shone2018deep, goh2017anomaly}.

In contrast to typical binary classification problems, where every class follows some probability distribution, an anomaly is a pattern that does not conform to the expected behavior. 
In other words, a completely novel type of outliers, which is not similar to any known anomalies, can occur at a test time. Moreover, in most cases, we do not have access to any anomalies at training time. 
In consequence, novelty detection is usually solved using unsupervised approaches, such as one-class classifiers, which focus on describing the behavior of nominal data (inliers) \cite{scholkopf2001estimating, abati2019latent, li2018anomaly, wang2019effective}. Any observation, which deviates from this behavior, is labeled as an outlier.

Following the above motivation, we propose \our{} -- a deep one-class classifier based on flow models. In contrast to typical (generative) flow-based density models, which focus on density estimation,
\our{} does not depend strongly on the structure of outliers since it finds a bounding region with a minimal volume for a fixed $(1-\alpha)$ portion of data, e.g. $\alpha=5\%$, see Figure \ref{fig:sat} for a comparison. This is realized by finding a hypersphere with a minimal radius in the output space of a neural network, which contains $(1-\alpha)$ percentage of data,  see Figure \ref{fig:intro}. 
While minimum volume sets were considered previously in \cite{scott2006learning, zhao2009anomaly} in the context of anomaly detection, these works were mainly devoted to theoretical aspects and the algorithms proposed there do not scale well to large datasets. On the other hand, in the case of kernel methods, such as Support Vector Data Description (SVDD) \cite{tax2004support}, the minimum volume problem has been reformulated to obtain a convex objective, which solves a slightly different problem.
In particular, instead of enclosing $(1-\alpha)$ fraction of data within a hypersphere, SVDD adds a penalty for data points outside the hypersphere. Making use of neural networks, we do not need to stick to convex optimization and, therefore, \our{} solves the original problem directly. In \our{}, the gradient of the cost function is propagated only through the points located near the decision boundary (a behaviour which is similar to that of support vectors in SVM), see Remark \ref{re:31}. To the authors' best knowledge, this is the first model, which applies this paradigm in deep learning without any relaxations described above.

\begin{figure}[t]
  \begin{center}
    \includegraphics[width=0.4\textwidth]{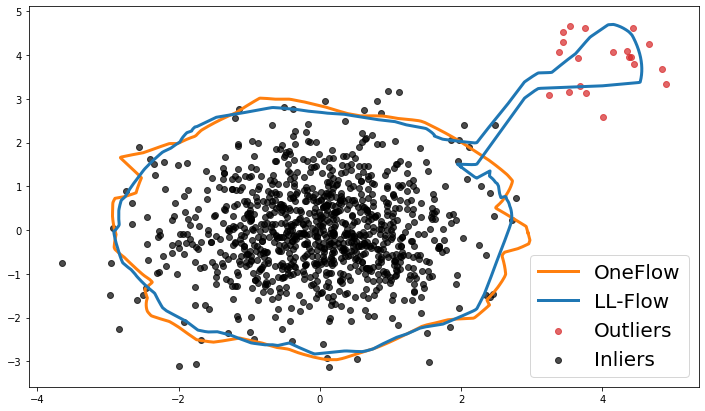}
  \end{center}
  \caption{Bounding regions constructed by \our{} and typical log-likelihood flow-based density model (\flow{}). \flow{} puts a similar weight to both blobs and marks a few examples from the smaller one as nominal data. \our{} finds a bounding region with a minimal volume for a fixed percentage of data. To minimize the volume it focuses on a bigger blob and considers the smaller one as anomalies.}\label{fig:sat}
\end{figure}

\begin{figure*}[t] 
    \centering
        \includegraphics[width=0.8\textwidth]{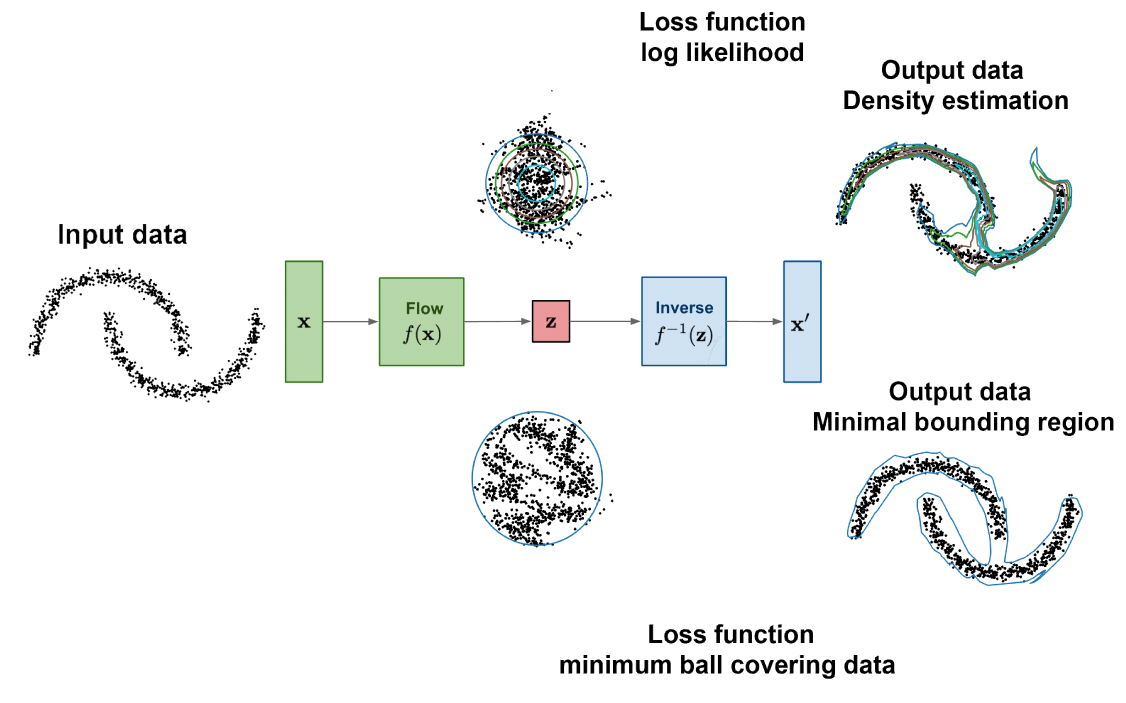} 
    \caption{\our{} finds a bounding region with a minimal volume for a fixed percentage of data using a hypersphere in the latent space of flow model.}
    \label{fig:intro}
\end{figure*}

\begin{figure}[t] 
    \centering
        \includegraphics[width=0.22\textwidth]{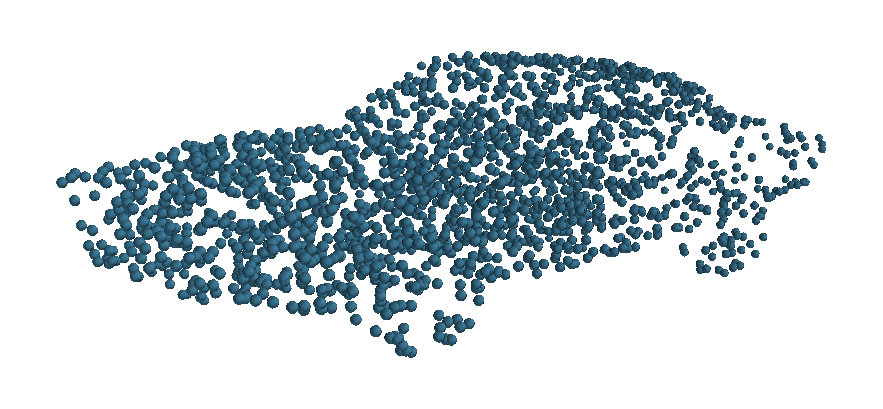}
        \includegraphics[width=0.22\textwidth]{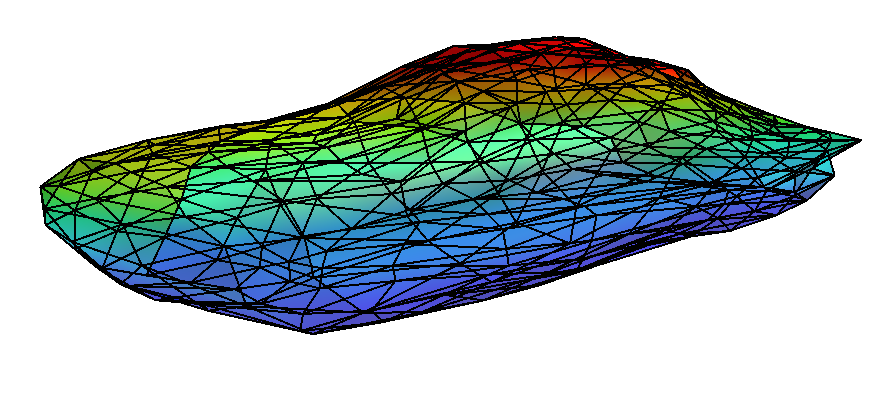}\\
        \includegraphics[width=0.22\textwidth]{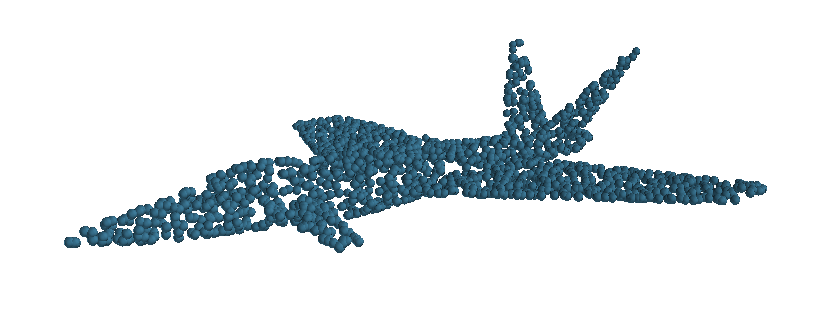}
        \includegraphics[width=0.22\textwidth]{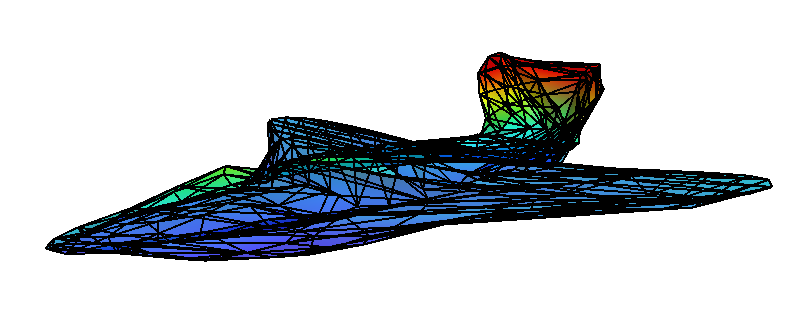}
    \caption{
    Mesh representations generated by \our{} (right) for the shapes represented as 3D point clouds (left). Our method automatically removes outliers, which may be generated from other shapes in the background, and gives an explicit parametric form of the boundary of objects using the inverse mapping of the flow model.}
    \label{fig:plots_3d_car1_1}
\end{figure}

\our{} uses two important ingredients. The first one is the application of flow-based models~\cite{dinh2014nice, kingma2018glow}, which give an explicit formula for the inverse mapping and allow us to calculate a Jacobian of a neural network at every point. In consequence, minimizing the volume of the hypersphere in the feature space leads to the minimization of the volume of the corresponding bounding region in the input space. Moreover, making use of the inverse mapping, we automatically get a parametric form for the corresponding bounding region in the input space, which is useful, for example, in describing shapes from 3D point clouds~\cite{yang2019pointflow,spurek2020hypernetwork}, see Figure \ref{fig:plots_3d_car1_1} for details. The second ingredient is the Bernstein polynomial estimator \cite{cheng1995bernstein} of the upper $(1-\alpha)$-quantile, which is used in \our{} loss to estimate a hypersphere for $(1-\alpha)$ fraction of data. 

We performed extensive experiments on image datasets (MNIST and Fashion-MNIST), examples retrieved from UCI~\cite{asuncion2007uci}, openML~\cite{vanschoren2014openml} repositories as well as KDD CUP 1999. Experimental results show that \our{} gives comparative or even better performance than state-of-the-art models for anomaly detection. In particular, \our{} outperforms typical flow-based density models, which use log-likelihood objective as well as deep SVDD method. 

Our contribution is summarized as follows:
\begin{enumerate}
    \item We introduce a one-class model for anomaly detection based on invertible flows, which focuses on finding a bounding region for nominal data.
    \item We show that the application of Bernstein quantile estimator in our neural network framework allows us to estimate the volume of the bounding region for a fixed $(1-\alpha)$ percentage of data in a closed-form.
    \item We experimentally analyze the behavior of the proposed approach and compare it with state-of-the-art methods.
\end{enumerate}

The code is available at \url{github.com/gmum/OneFlow}.


\section{Related works}

One of the most successful approaches to anomaly detection is based on one-class learning. One-class SVM (OCSVM) \cite{scholkopf2001estimating} and SVDD \cite{tax2004support} are two well known kernel methods for one-class classification. While OCSVM directly uses SVM to separate the data from the origin (considered as the only negative sample), SVDD aims to enclose most of the data points inside a hypersphere with minimal volume. To provide a unique local minimum, SVDD relaxes this problem to the convex objective by penalizing data points outside the hypersphere. In a similar spirit, \cite{chen2013new} apply Ranking SVM based on rankings created from pairwise comparison of nominal data. In contrast to SVDD, which reformulates minimum volume problem, \cite{scott2006learning, zhao2009anomaly} derived algorithms, which, under some assumptions on data density, are provably optimal. However, despite obtaining important theoretical results, these methods do not scale well to large datasets. In contrast to these works, we do not use any simplifications in the cost function, but solve the minimum volume set problem efficiently.

Recent research on anomaly detection is dominated by methods based on deep learning. Attempts for adapting SVDD to the case of neural networks are presented in \cite{ruff2018deep, kim2015deep, ruff2019deep, chong2020simple}. However, the direct minimization of SVDD loss may lead to hypersphere collapse to a single point. To avoid this negative behavior, DSVDD (deep SVDD) recommends that the center must be something other than the all-zero-weights solution, and the network should use only unbounded activations and omit bias terms \cite{ruff2018deep}. While the first two conditions can be accepted, omitting bias terms in a network may lead to a sub-optimal feature representation due to the role of bias in shifting activation values. To eliminate these restrictions a recent work \cite{chong2020simple} proposes two regularizers, which prevent from hypersphere collapse, and use an adaptive weighting scheme to control the amount of penalization between the SVDD loss and the respective regularizer. Making use of flow models, we are able to directly calculate the volume of the bounding region in the original space (not only the volume of the transformation in the latent space). Due to the direct correspondence between original density and prior density given by a flow model, we avoid degenerate solutions, which may appear in DSVDD models.

The vast majority of deep learning methods use neural network representation learning capability to generate a latent representation to preserve the details of the given class based on auto-encoder reconstruction error \cite{abati2019latent, dasgupta2018neural, li2018anomaly}. This line of research includes strictly unsupervised techniques \cite{wang2019effective} as well as supervised and semi-supervised methods \cite{shu2018unseen}. Nevertheless, there is no theoretical justification that reconstruction error captures enough information to separate nominal data from outliers. 
Many research proposes GAN-based approaches as the criterion considered introduced anomaly score \cite{schlegl2017unsupervised, schlegl2019f}, sample's representation in the discriminator's latent space \cite{deecke2018image, perera2019ocgan} or even separability of the enhanced inliers and distorted outliers
\cite{sabokrou2018adversarially}. Another direction of related work is the discovery of out-of-distribution instances (which are basically anomalies). Various forms of thresholding are used on the classification output to detect anomalies \cite{hendrycks2016baseline, liang2017enhancing, devries2018learning}. \cite{wang2019multivariate} defined a general approach for anomaly detection, which is based on thresholding multivariate quantile function. Analogically to our approach, they use flow-based models, but in the context of density estimation. Another use of density-based flow models is presented in \cite{schmidt2019normalizing}. Moreover, there are methods, which uses flow-based models with their ability to density estimation as key part of anomaly detection models - \cite{nachman2020anomaly, wellhausen2020safe}. A comprehensive review on deep anomaly detection can be found in \cite{ruff2021unifying, pang2020deep}.

\section{The proposed model}

In this section, we first give a precise formulation of outlier detection task using a bounding region with minimum volume. Next, we present our solution based on flow models. Finally, we discuss optimization issues, which are solved using the Bernstein quantile estimator. The notation used in the following section is summarized in Appendix \ref{app:notation}.

\subsection{Problem formulation.} Our goal is to find a bounding region with a minimal volume, which contains a fixed amount of data, e.g. $95\%$. This refers to one of the typical ideas used in one-class classification \cite{scott2006learning, zhao2009anomaly, tax2004support}, where we describe the behavior of nominal data. Ignoring a small number of data allows us to deal with anomalies in training as well as to focus only on the most important features. In contrast to density-based approaches, which estimate a density of the whole data, we solve the easier task by separating nominal data from abnormal examples.

Let $g$ be a density in $\R^D$, and let $X \subset \R^D$ be a sample generated by $g$. We assume that we are given a $p$-value $\alpha \in (0,1)$, which determines the percentage of possible outliers, which can also be seen as a false alarm rate\footnote{
If not stated otherwise, we use $\alpha=5\%$, which is motivated by a typical approach used in hypothesis testing.}. We say that $U \subset \R^D$ is a $(1-\alpha)$-bounding region of $g$ if a data point generated from a density $g$ belongs to $U$ with a probability $1-\alpha$, i.e. $\int_U g(x) \, dx=1-\alpha$. Intuitively, the $(1-\alpha)$-bounding region covers approximately $(1-\alpha)$ percentage of data, which allows us to deal with outliers or noise in training data.

Our problem is formally formulated below:
\begin{problem}
Find a $(1-\alpha)$-bounding region $U \subset \R^D$ with a minimal volume for a density $g$ generating data, i.e.
    $$
    \arg \min_U \big\{\vol(U) \, | \, U:
    \int_U g(x) \, dx=1-\alpha\big\}.
    $$
\end{problem}

To allow for sufficient flexibility in defining the form of $U$, we use deep neural networks. Given a neural network $f: \R^D \to \R^D$, we aim at finding $r > 0$ such that $U := f^{-1}(B(0,r))$, where $B(0,r)$ denotes a ball centered at $0$ with radius $r$ in the latent (feature) space of the neural network. In other words, the $(1-\alpha)$-bounding region $U$ is the inverse of a ball with radius $r$ in the feature space. While the computation of $f^{-1}$ can be difficult for arbitrary neural networks, we restrict our attention to flow-based models, which give an explicit form of inverse mapping $f^{-1}$. 

First, we demonstrate that the volume $\vol(f^{-1}(B(0,r)))$ can be calculated efficiently for flow-based models. Next, we show that the application of Bernstein estimator allows us to find a hypersphere for $(1-\alpha)$ percentage of data, which is the solution of our optimization problem.

\subsection{Volume calculation using flow-based models.}

Let us recall that a neural network $f:\R^D \to \R^D$ is a flow-based model if the inverse mapping $f^{-1}$ is given explicitly and the Jacobian determinant $\det d f(x)$ can be easily calculated. Flow-based models have been usually used in the case of generative models because a direct form of $f^{-1}$ allows one to generate samples from the prior distribution, while the condition for Jacobian makes the optimization of log-likelihood function possible. Their direct application in the context of anomaly detection can be compared to the use of GMMs. Given a distribution of data, we discard $\alpha$ percentage of data or a region of data space with a probability $\alpha$. Since we want to realize a different objective, we need to redefine the loss for flow-based models. 

As mentioned, flow-based models are designed to calculate the Jacobian of $f$ effectively, which allows us to optimize the log-likelihood function in the case of neural networks directly. From this perspective, flow-based models can be divided into two natural classes. The first class, referred to as \emph{const-det flows}, contains models where $\det d f(x)$ is constant (does not depend on $x$), e.g. NICE \cite{dinh2014nice}. The models from the second class, called here \emph{general flows}, can change the derivative at different points, e.g. Real NVP \cite{dinh2016density}.

We show that for const-det flows we can obtain the exact formula for $\vol(f^{-1}(B(0,r)))$, while for general flows its approximation can be derived. For this purpose, we introduce a notation:
$$
w(x) = \det d (f^{-1})(x) = \frac{1}{\det d f (f^{-1}(x))}.
$$
Note that for const-det flows, $w(x) = w$ is a constant function.

\begin{observation}
Let $f$ be a const-det flow model, i.e. $w = w(x)$ is constant. The volume of $U = f^{-1}(B(0,r))$ is given by:
\begin{equation}\label{eq:const}
\vol(U) = \vol(B(0,1)) \cdot w \cdot r^D.
\end{equation}
\end{observation}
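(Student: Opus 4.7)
The plan is to derive the formula by a direct change of variables using the inverse mapping $f^{-1}$, then exploit the constancy of $w$ and the standard scaling of Euclidean balls.

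First I would write the volume of $U$ as the integral of the constant function $1$ over $U = f^{-1}(B(0,r))$, and then change variables via the substitution $x = f^{-1}(y)$. By the definition of $w$ given just before the observation, the Jacobian determinant of this substitution is $w(y)$, so
\begin{equation*}
\vol(U) = \int_{f^{-1}(B(0,r))} 1 \, dx = \int_{B(0,r)} w(y) \, dy.
\end{equation*}
(Strictly speaking one should take $|w(y)|$, but since $f$ is a diffeomorphism the sign of the Jacobian determinant is constant, and only its absolute value matters for volume; I would note this in one line and absorb it into the notation.)

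Next I would use the assumption that $f$ is a const-det flow, i.e.\ $w(y) \equiv w$ is independent of $y$. Pulling the constant outside gives
\begin{equation*}
\vol(U) = w \cdot \vol(B(0,r)).
\end{equation*}
Finally, applying the elementary scaling identity $\vol(B(0,r)) = r^D \cdot \vol(B(0,1))$ in $\R^D$ yields the claimed formula.

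There is no real obstacle here: the argument is a one-line application of the change-of-variables theorem combined with the constancy hypothesis and the scaling of $D$-dimensional Lebesgue measure. The only point worth being slightly careful about is making the identification between $\det d(f^{-1})$ and the Jacobian determinant appearing in the change-of-variables formula, and ensuring that taking absolute values is harmless because $f$ is a diffeomorphism, so its Jacobian has constant sign.
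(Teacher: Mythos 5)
Your proof is correct and follows essentially the same route as the paper's: a change of variables sending the integral over $U = f^{-1}(B(0,r))$ to an integral of $w$ over $B(0,r)$, then using constancy of $w$ and the scaling $\vol(B(0,r)) = r^D \vol(B(0,1))$. The only difference is that you are slightly more explicit about the absolute value of the Jacobian and the intermediate scaling step, which the paper glosses over.
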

\begin{proof}
Clearly, by a change of variables,
$$
\vol(U)=\vol (f^{-1}(B(0,r)))=\int_{B(0,r)} \det d(f^{-1})(x) \, dx.
$$
Since $w$ is a constant function, we get
$\vol(U)=\int_{B(0,r)} w(x) \, dx= \vol(B(0,1)) \cdot  w \cdot r^D$.
\end{proof}

If $w(x)$ depends on $x$, then the situation is more complex, but we can still obtain an approximation of the volume for general flows as
\begin{align} \label{eq:var}
\vol(U)=\int_{B(0,r)} w(x) dx 
\approx \vol(B(0,r)) \cdot \frac{1}{m}\sum_{k=1}^m w(r \cdot e_i) \nonumber\\
=\frac{\vol(B(0,1))}{m} \cdot r^D \cdot \sum_{k=1}^m w(r \cdot e_i),
\end{align}
where $e_i$ are points randomly chosen with respect to uniform distribution on $B(0,1)$. This is a type of the Monte Carlo sampling and it is generally difficult to control the accuracy of this estimation

\subsection{Optimization algorithm.}
We presented how to find formulas for computing the volume of the bounding region using flow-based models. Now, we apply this fact to construct the optimization procedure for computing $(1-\alpha)$-bounding region. 

Let $(x_1,\ldots,x_n) \subset X$, be a mini-batch, $\theta$ be the weights of the flow model $f_\theta$ and $\alpha \in (0,1)$ be a given $p$-value. To apply the formula \eqref{eq:const} for const-det flow, we need to find the radius of the ball, which contains $(1-\alpha)$ percentage of data. We estimate this radius by first computing 
$$
r_i(\theta)=\|f_\theta(x_i)\|, 
$$
and next applying the estimator of upper $(1-\alpha)$-quantile. 

As a quantile estimator, we use Bernstein polynomial estimator \cite{cheng1995bernstein, leblanc2012estimating, zielinski2004optimal} -- see Remark \ref{thm:rem} for the justification of this selection. Let us recall that given a sample $s_1,\ldots,s_n$ drawn from the same distribution, the Bernstein estimation of $(1-\alpha)$-quantile $s_\alpha$, where $\alpha \in (0,1)$, is constructed in the following way. First, we reorder $s_i$ so that $s_{(1)} \leq \ldots \leq s_{(n)}$. Then the Bernstein estimator of $(1-\alpha)$-quantile is defined by:
$$
s_\alpha = \sum_{k=1}^n \binom{n-1}{k-1} \alpha^{k-1}(1-\alpha)^{n-k} \cdot s_{(k)}
$$
Bernstein polynomials are known to yield very smooth estimates, even from the small sample size, that typically have acceptable behavior at the boundaries.

Applying the above construction to our case, we do as follows:
\begin{itemize}
    \item the sequence $(r_{(i)}(\theta))$ is obtained by sorting $r_i(\theta)$ in a descending order,
    \item the Bernstein polynomial estimator of upper $(1-\alpha  )$-quantile is given by 
    \begin{equation} \label{eq:R}
    R_\alpha(\theta)= \sum_{k=1}^n \binom{n-1}{k-1}\alpha^{k-1} (1-\alpha)^{n-k} \cdot r_{(k)}(\theta).
    \end{equation}
\end{itemize}

In consequence, the volume of the bounding region for const-det flows is given by 
$$
\vol=w \cdot \vol(B(0,1)) \cdot R_\alpha(\theta)^D.
$$
To avoid potential numerical problems in the cost function, one can minimize logarithm of the volume (instead of the volume itself):
$$
\cost(\theta)=\log(\vol(B(0,1)))+\log(w) +D \log  R_\alpha(\theta).
$$

For general flows, we use the formula \eqref{eq:var} in the above calculations. Thus the estimation for the volume of the $(1-\alpha)$-bounding region is given by 
$$
\vol \approx \tfrac{\vol(B(0,1))}{m} \cdot R^D_\alpha(\theta) \cdot \sum_{k=1}^m w(R_\alpha(\theta) e_k),
$$
where  $(e_k)$ is a sequence of $m$ randomly chosen points from the uniform distribution on the unit ball $B(0,1)$. 
The final cost function in the logarithmic form equals:
\begin{multline*}
\cost(\theta)=\\
\log(\tfrac{\vol(B(0,1))}{m}) + D \log R_\alpha(\theta) + \log \big( \sum_{k=1}^m w(R_\alpha(\theta) e_k) \big).
\end{multline*}

\begin{remark} \label{re:31}
We explain now that, in contrast to flow-based density models, the gradient of our loss function is propagated only over a small number of points, which are located close to the decision boundary. For $n\alpha>9$ (e.g. for $\alpha=0.05$ and $n=256$), the Bernoulli distribution can be approximated by the Gaussian distribution $N(m,\sigma^2)$ with $m=n\alpha$ and $\sigma^2=n\alpha(1-\alpha)$. Thus by the $3\sigma$ law, we obtain that numerically essential weights are only for $k$ examples, where $k \in [n\alpha-3\sqrt{n\alpha(1-\alpha)},
 n\alpha+3\sqrt{n\alpha(1-\alpha)}]$. Consequently, we obtain that only the following percentage of samples from the batch obtain nonzero gradient:
$$ 
\frac{6 \sqrt{n \alpha (1-\alpha)}}{n}=
\frac{3\sqrt{19}}{160} \approx 0.08,
$$
where $6 \sqrt{n \alpha (1-\alpha)}$ is the length of the above interval.
\end{remark}

\begin{remark} \label{thm:rem}
The simplest estimator of the $(1-\alpha)$-quantile for the sorted sample $s_{(1)}\leq \ldots \leq s_{(n)}$ can defined by:
$$
s^{sim}_{\alpha} = \left\{
\begin{array}{ll}
     s_{(\alpha n)}, &  \text{ if } \alpha n \text{ is an integer,} \\
     s_{([\alpha n]+1)}, &  \text{ otherwise, } 
\end{array}
\right.
$$
where $[a]$ is the greatest integer which is not greater than $a \in \R$. If such an estimator was used in our loss function, then the gradient would be based only on a single point that establishes the gradient. This may result in a very slow and inefficient training. In the case of Bernstein polynomial estimator, the gradient of the loss function in \our{} is propagated over a number of points, which are located close to the decision boundary (see Remark~\ref{re:31}) -- the behavior that is similar  to SVM models. Another thing is that the Bernstein estimator is smooth \cite[Section 2.3]{zielinski2004optimal}, while the simple estimator $s^{sim}_{\alpha}$ is not even continuous.

It is also instructive to see the difference between \our{} and typical flow-based method applying log-likelihood loss. Since the log-likelihood method focuses on a density estimation, the gradient is based on all points, which could lead to an overfitting and not optimal setting of the border. Density estimation can be seen as an additional task in anomaly detection, while the essential problem is to estimate the quantile, which is directly solved by \our{}.
\end{remark}


\section{Experiments}

In this section, we experimentally examine \our{} and compare it with several state-of-the-art approaches. \our{} is implemented using the architecture of NICE flow model
and $\alpha = 5\%$ (see Appendix~\ref{app:exp} for the experimental setting). An analysis of parameter $\alpha$ is presented at the end of this section. If not stated otherwise, we consider a variant of const-det flow (Jacobian determinant is constant).  

\begin{table*}[t]
\caption{Performance on two anomaly detection datasets (measured by Precision, Recall and F1 score). The results marked with * are taken from \cite{wang2019multivariate}.} \label{tab:res1}
\centering
\begin{tabular}{l ccc ccc }
\toprule
{} & \multicolumn{3}{c}{Thyroid} & \multicolumn{2}{c}{KDDCUP} \\ 
\cline{2-7}
{} & Precision & Recall & F1 & Precision & Recall & F1 \\
\midrule
OC-SVM *        & .3639 & .4239 & .3887 & .7457 & .8523 & .7954 \\
DSEBM *         & .0404 & .0403 & .0403 & .7369 & .7477 & .7423 \\
DAGMM *         & .4766 & .4834 & .4782 & .9297 & .9442 & .9369 \\
DSVDD          & .6989 & .6989 & .6989 & .6898 & .7055 & .6975 \\ 
NLL *           & .7312 & .7312 & .7312 & .9622 & .9622 & .9622 \\
TQM$_{1}$ *     & .5269 & .5269 & .5269 & .9621 & .9621 & .9621 \\ 
TQM$_2$ *       & .5806 & .5806 & .5806 & .9622 & .9622 & .9622 \\
TQM$_{\infty}$ * & .7527 & .7527 & .7527 & .9622 & .9622 & .9622 \\
LL-Flow        & .6989 & .6989 & .6989 & .6782 & .6524 & .6650 \\
LL-Flow-Gen    & .6808 & .6955 & .6881 & .9268 & .9268 & .9268 \\
\our{}         & \textbf{.7634} & \textbf{.7634} & \textbf{.7634} & .9702 & .9702 & .9702 \\
\ourdet{}      & .7097 & .7097 & .7097 & \textbf{.9712} & \textbf{.9712} & \textbf{.9712} \\
\bottomrule
\end{tabular}
\end{table*}

\subsection{Benchmark data for anomaly detection.} 
First, we provide a quantitative assessment and take into account the Thyroid\footnote{\url{http://odds.cs.stonybrook.edu/thyroid-disease-dataset/}} and KDDCUP\footnote{\url{http://kdd.ics.uci.edu/databases/kddcup99/kddcup.testdata.unlabeled_10_percent.gz}} datasets, which are real-world benchmark datasets for anomaly detection. We use the standard training and test splits and follow exactly the same evaluation protocol as in \cite{wang2019multivariate}. The performance was measured using F1 score, because this metric was reported for all methods considered.

\begin{figure*}[t] 
    \centering
    \subfigure{%
  	    \centering
        \includegraphics[width=0.48\textwidth]{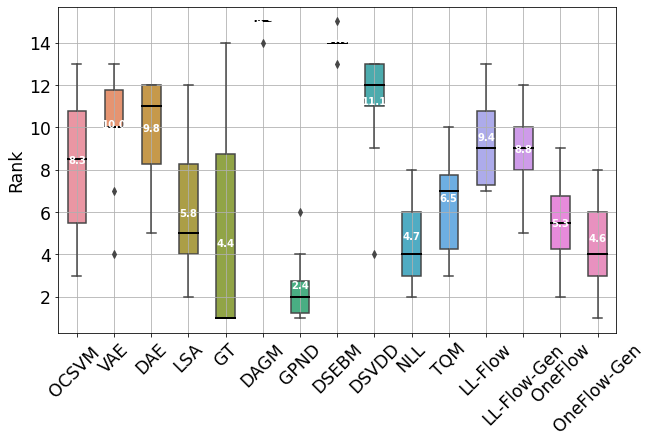}
        \label{fig:ranks_mnist}
      } 
    \subfigure{%
      	\centering
        \includegraphics[width=0.48\textwidth]{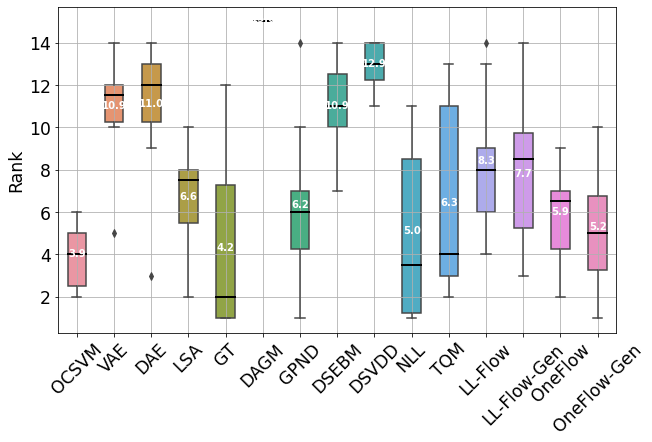}
        \label{fig:ranks_fmnist}
      } 
    \caption{Box plots for rankings calculated on MNIST (left) and Fashion-MNIST (right) using AUC score. The median ranking is marked by a line, while the average ranking is marked with a number. The results of cometitive methods (except DSVDD, LL-Flow and LL-Flow-Gen) are taken from \cite{wang2019multivariate}.}
    \label{fig:ranks}
\end{figure*}

We use two variants of \our{}. The first one (\our{}) uses constant Jacobian while the second (\ourdet{}) allows for changing the Jacobian at every point. Our models are compared with the following algorithms (see Appendix~\ref{app:desc} for more detailed description): One-class SVM (OC-SVM) \cite{scholkopf2001estimating}, Deep structured energy-based models (DSEBM) \cite{zhai2016deep}, Deep autoencoding Gaussian mixture model (DAGMM) \cite{zong2018deep}, variants of MQT -- multivariate quantile map (NLL, TQM$_1$, TQM$_2$, TQM$_\infty$) \cite{wang2019multivariate}, Deep Support Vector Data Description (DSVDD) \cite{ruff2018deep}, two variants of log-likelihood flow model (\flow{} and \flowdet{}). The bounding region of \flow{} is constructed by taking the smallest hypersphere in the latent space, which covers $(1-\alpha)$ percentage of data (the hypersphere is determined by the prior Gaussian distribution). Due to the change of variable rule used in flow models, this strategy corresponds to thresholding the density in the original space so that to cover $(1-\alpha)$ percentage of data.

The results presented in Table \ref{tab:res1} show that both variants of our model perform almost equally on KDDCUP and they are better than all competitive methods. In the case of Thyroid, \our{} presents the best performance, while \ourdet{} gives third best score. The most similar method, DSVDD, was not able to obtain similar performance on these datasets, which shows that a direct minimization of the bounding region implemented by our method is more beneficial. While the results of Triangular Quantile Maps (TQM and NLL) depends heavily on the assumed norm and cost function, there is no objective criteria for selecting these parameters. Other methods produce worse results.

\subsection{Image datasets}

To provide further experimental verification, we use two image datasets: MNIST and Fashion-MNIST. In contrast to the previous comparison, these two datasets are usually used for multiclass classification and thus need to be adapted to the problem of anomaly detection. For this purpose, each of the ten classes is deemed as the nominal class while the rest of the nine classes are deemed as the anomaly class, which results in 10 scenarios for each dataset. To be consistent with \cite{wang2019multivariate}, we report AUC (area under ROC curve).

We additionally compare with the following models: Geometric transformation (GT) \cite{golan2018deep}, Variational autoencoder (VAE) \cite{kingma2013auto}, Denoising autoencoder (DAE) \cite{vincent2008extracting}, Generative probabilistic novelty detection (GPND) \cite{pidhorskyi2018generative}, Latent space autoregression (LSA) \cite{abati2019latent}. In contrast to previous experiment, we only use TQM$_2$ and NLL as the only implementations of MTQ, because they output the highest value of AUC \cite{wang2019multivariate}.

To present the results, we compute the ranking on each of 10 scenarios and summarize it using box plot, see Figure \ref{fig:ranks} (detailed results are included in Appendix~\ref{app:detailed_results} in Tables~\ref{tab:mnist} and~\ref{tab:fashion}). It is evident that \our{} and \ourdet{} outperform related \flow{} and \flowdet{}, which confirms that the proposed loss function suits better for one-class classification problems than typical log-likelihood function. Our methods give also better scores than DSVDD, which implements a similar loss function. The overall ranking of \our{} and \ourdet{} is comparative to the best performing methods on both datasets. It is difficult to clearly determine which method performs best, because of the high variation in the results. While GPND seems to outperform other methods on MNIST, its result on Fashion-MNIST is similar to \our{}. We emphasize, however, that both MNIST and Fashion-MNIST do not represent typical anomaly detection datasets.

\begin{figure}[t]
  \begin{center}
    \includegraphics[width=0.48\textwidth]{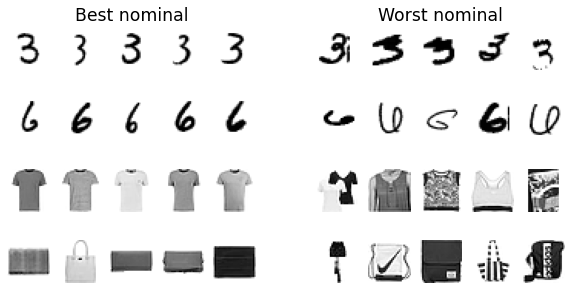}
  \end{center}
  \caption{Best nominal (left) and worst nominal (right) examples determined by OneFlow for MNIST (top) and Fashion-MNIST (bottom). }\label{fig:samples}
\end{figure}

\begin{figure*}[h!] 
    \centering
    \includegraphics[width=0.45\textwidth]{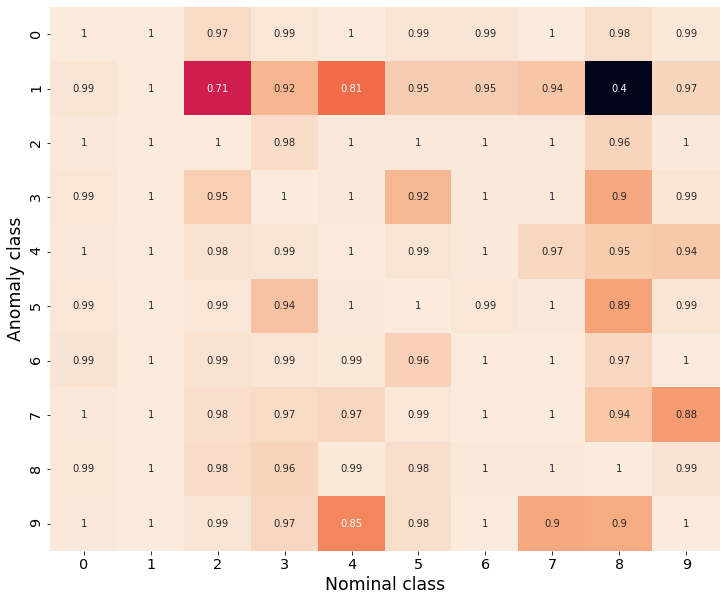}
    \includegraphics[width=0.45\textwidth]{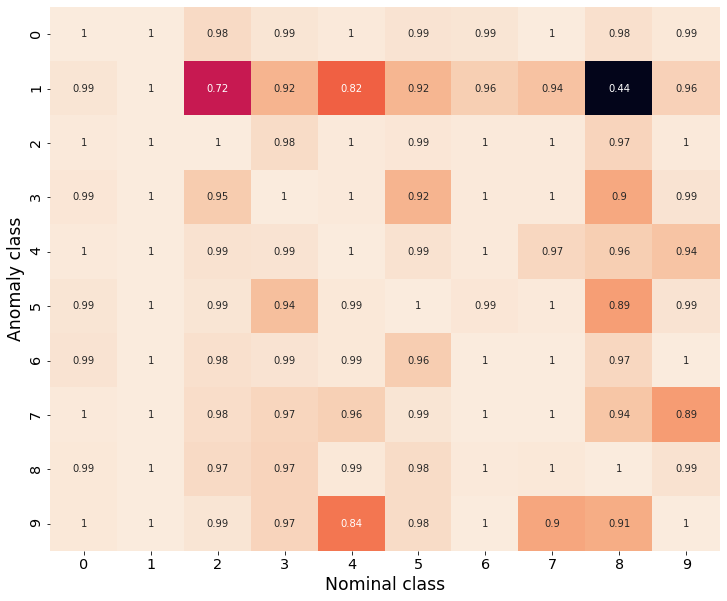}
    \caption{AUC obtained for one nominal class (in columns) and one anomaly class (in rows) by \our{} (left) and \ourdet{} (right) on MNIST dataset.}
    \label{fig:datailed_scores_mnist}
\end{figure*}

\begin{figure*}[h!] 
    \centering
    \includegraphics[width=0.45\textwidth]{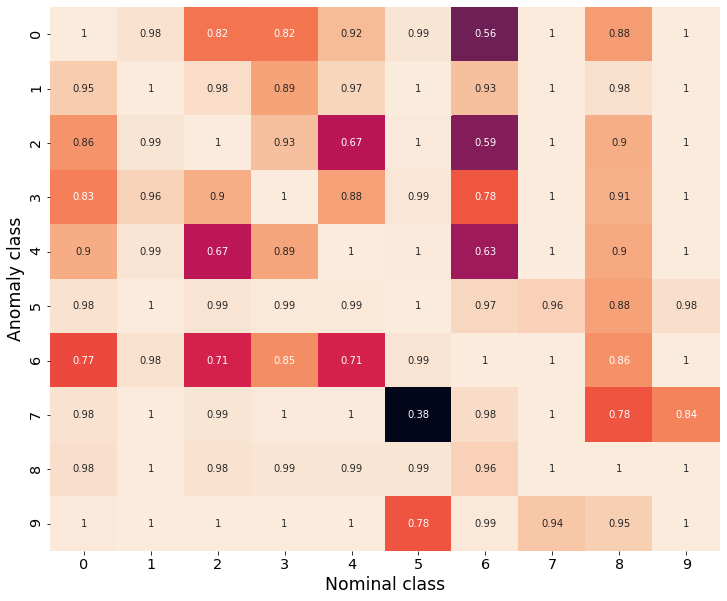}
    \includegraphics[width=0.45\textwidth]{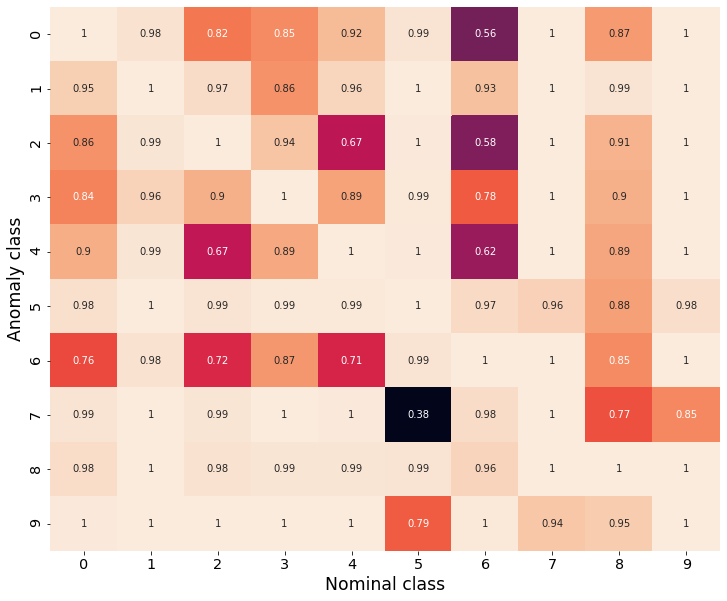}
    \caption{AUC obtained for one nominal class (in columns) and one anomaly class (in rows) by \our{} (left) and \ourdet{} (right) on Fashion-MNIST dataset.}
    \label{fig:datailed_scores_fashion}
\end{figure*}

Next, we analyze which samples from the nominal class are localized close to or furthest from the center of bounding hypersphere. It is evident from Figure \ref{fig:samples} that \our{} maps images with regular structure, which are easy to recognize, in the hypersphere center. On the other hand, examples localized far from the center (outside the bounding region) do not look visually plausible and one cannot be sure about their class. It means that \our{} gives results consistent with our intuition.

To give a better insight, we calculate AUC for every pair of classes, see Figures \ref{fig:datailed_scores_mnist} and \ref{fig:datailed_scores_fashion}. More precisely, every entry of the heatmap shows AUC obtained for a given nominal class listed in a column and a given anomaly class listed in a row. For example, it occurs that \our{} and \ourdet{} have the biggest problems with detecting anomalies represented by class "1" when trained on class "8". Generally, it is evident that the class "1" is the hardest to describe by our model. It may be explained by the fact that handwritten digit "1" can be written in various styles, which makes it similar to other classes.

\subsection{PIDForest benchmark} 

We make additional benchmarks following the experimental setting of ~\cite{gopalan2019pidforest}. More specifically, we test \our{} and \ourdet{} on the following eight datasets from the  UCI~\cite{asuncion2007uci}, openML repository~\cite{vanschoren2014openml} and KDD Cup 1999: Thyroid, Mammography, Seismic, Satimage-2, Vowels, Musk, http, smtp. 

For a comparison, we use the following methods: PIDForest \cite{gopalan2019pidforest}, IsolationForest (iForest) \cite{liu2012isolation}, Robust Random Cut Forest (RRCF) \cite{guha2016robust}, Local Outlier Factor (LOF) \cite{breunig2000lof}, k-Nearest Neighbour (kNN), Principal Component Analysis (PCA) \cite{wold1987principal}. Every method was trained on the whole dataset (outliers included) and evaluation measure was reported on the same set. To reproduce the results from~\cite{gopalan2019pidforest}, we use AUC as the evaluation measure. Figure ~\ref{fig:pid_ranks} (left) presents box plot of ranks, which summarizes the experiment (Table ~\ref{tab:pid_roc} from Appendix~\ref{app:detailed_results} contains detailed scores).
While the performance of \our{} and \ourdet{} in overall is slightly worse than the best algorithms (PIDForest and iForest), it outperforms these methods on some of the datasets (Satimage, Vowels, http).

\begin{figure*}[t] 
    \centering
    \subfigure{%
  	    \centering
        \includegraphics[width=0.49\textwidth]{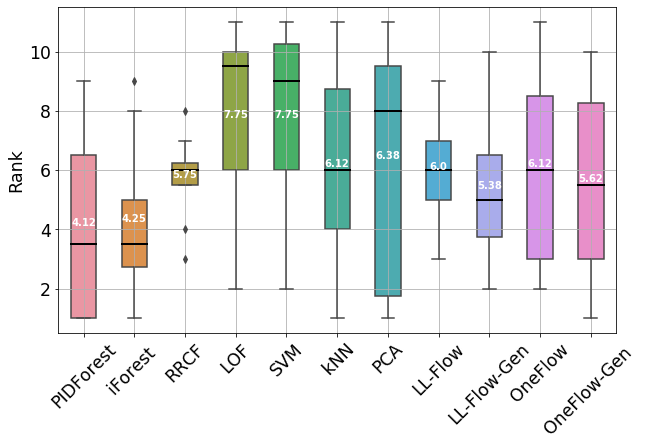}
        \label{fig:pid_ranks_roc}
      } 
    \subfigure{%
      	\centering
        \includegraphics[width=0.49\textwidth]{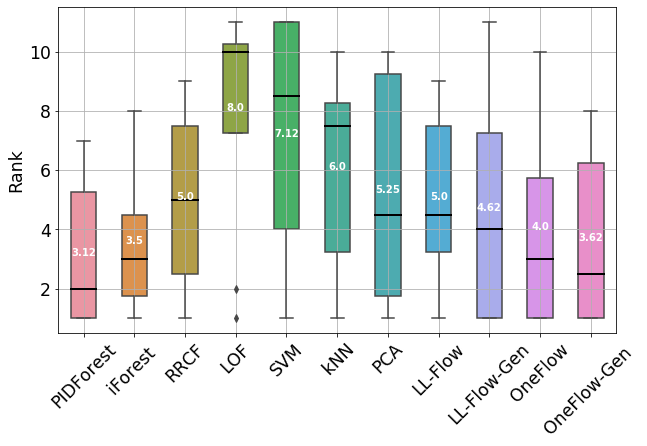}
        \label{fig:pid_ranks_f1}
      } 
    \caption{Box plots of ranks calculated using AUC (left) and F1 (right) on datasets from PIDForest benchmark. The median ranking is marked by a line, while the average ranking is marked with a number. The results of comparative methods (except LL-Flow and LL-Flow-Gen) are taken from \cite{gopalan2019pidforest}.}
    \label{fig:pid_ranks}
\end{figure*}

Let us recall that AUC does not take into account a single decision boundary, but effectively considers all possible test set thresholds. In the production environment, we need to have a classification rule and verify which algorithm detects outliers and nominal data correctly. For this reason, we also test all algorithms in a discriminative setting, in which 5\% of farthest examples (according to a given loss function) are deemed as anomalies. The results are evaluated using the F1 score.  We present rank plot in Figure~\ref{fig:pid_ranks} (right) (detailed scores are reported in Appendix~\ref{app:detailed_results} in Table~\ref{tab:pid_f1}).
One can see, that the performance of \our{} is comparable to the best algorithms (again PIDForest and iForest). \our{} or \ourdet{} is better than PIDForest and iForest in 4 out of 8 datasets, performs the same in 2 out of 8 datasets and is worse in 2 out of 8 datasets.
This experiment confirms that the proposed method is better at finding outliers, which is not the same as ranking elements according to the loss function, but is crucial in practice.

\begin{figure*}[t] 
    \centering
        \includegraphics[width=0.48\textwidth]{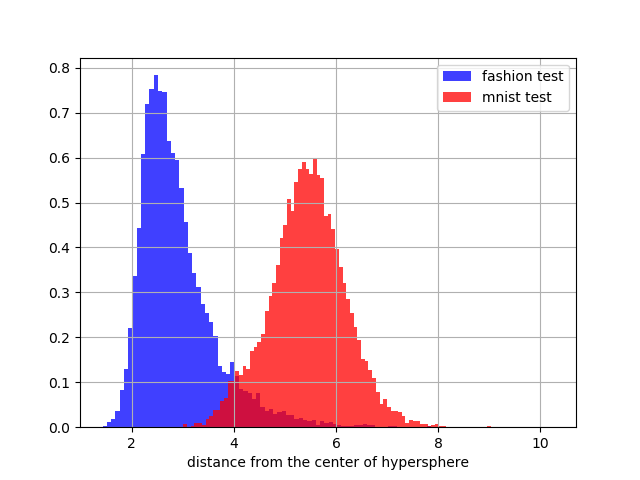}
        \includegraphics[width=0.48\textwidth]{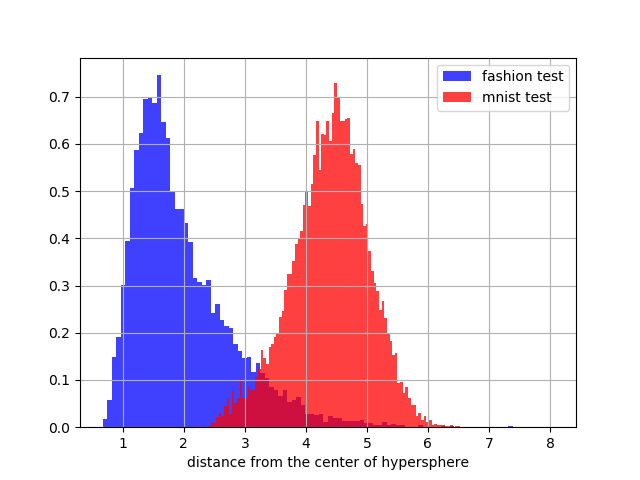}
    \caption{Distance of Fashion-MNIST nominal data (blue) and MNIST (representing outliers) data (red) from the center of bounding hypersphere in the latent space of \our{} (left) and \flow{} (right). The percentage of detected outliers equals $92.47\%$ for \our{} and $89.59\%$ for \flow{}.}
    \label{fig:ood_fmnist}
\end{figure*}

\subsection{Test on Out-Of-Distribution dataset} 
We now focus on comparing \our{} with \flow{}, which represents its natural baseline, and follow the experiment recently suggested in \cite{nalisnick2018deep}. In this setting, each model is trained on the Fashion-MNIST train set with $\alpha =5\%$. Next, we test these models on the data coming from test sets of both MNIST and Fashion-MNIST. We expect that the models will be able to classify MNIST examples as anomalies.

Figure \ref{fig:ood_fmnist} illustrates the distance of latent representations from the center of bounding hypersphere. As expected, in both cases, Fashion-MNIST (nominal) data are localized closer to the center than MNIST (outliers) data, which is correct behavior. However, \our{} maps out-of-distribution data (MNIST) much further from the center than \flow{} (see the range on x-axis). We verified  that the percentage of correctly classified anomalies equals $92.47\%$ for \our{} and $89.59\%$ for \flow{}, which means that the discriminative power of \our{} is higher than capabilities of \flow{}. We also performed the second experiment when MNIST was considered as nominal data and Fashion-MNIST represented outliers and both models obtained almost perfect performance in this situation.


\subsection{Illustrative examples} 
To find key differences between \our{} and \flow{}, we consider 2-dimensional examples, which are easy to visualize and represent a typical benchmark for comparing anomaly detection algorithms (additional illustrative examples on 3D point clouds are presented in Figure~\ref{fig:plots_3d_car1_1}). 

\begin{figure*}[t] 
    \centering
    \includegraphics[width=0.49\textwidth]{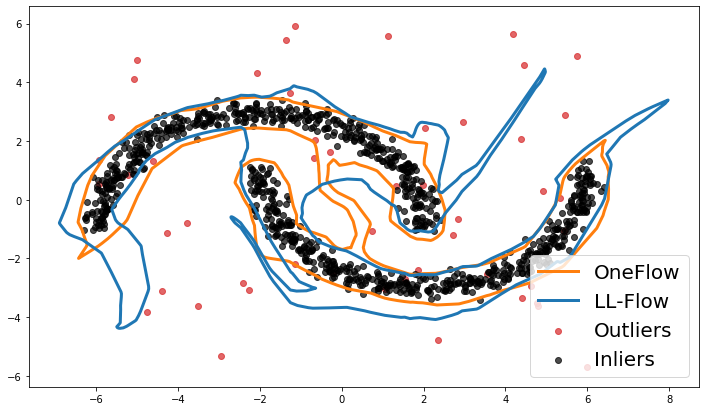}
    \includegraphics[width=0.49\textwidth]{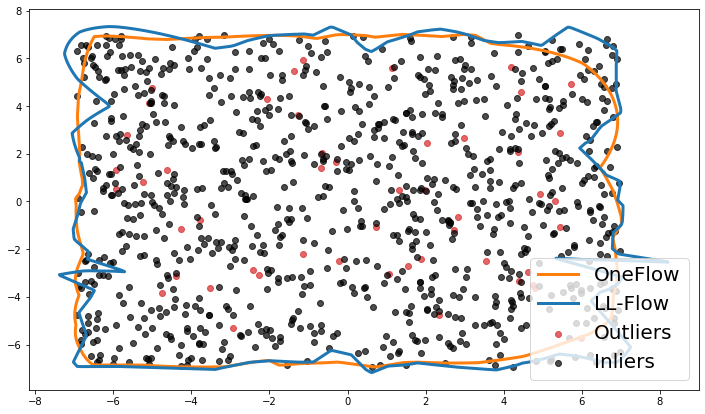}\\
    \includegraphics[width=0.49\textwidth]{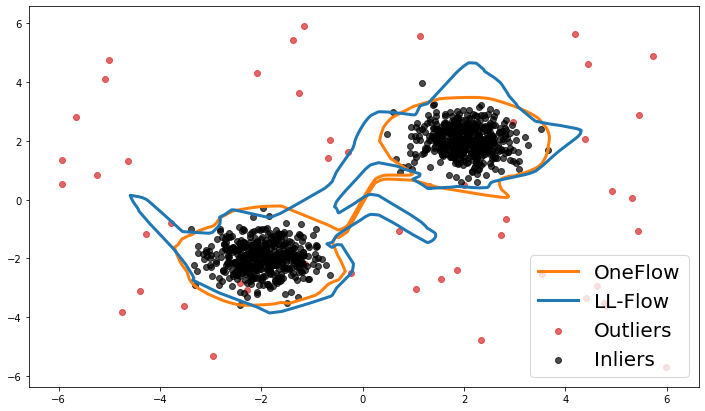}
    \includegraphics[width=0.49\textwidth]{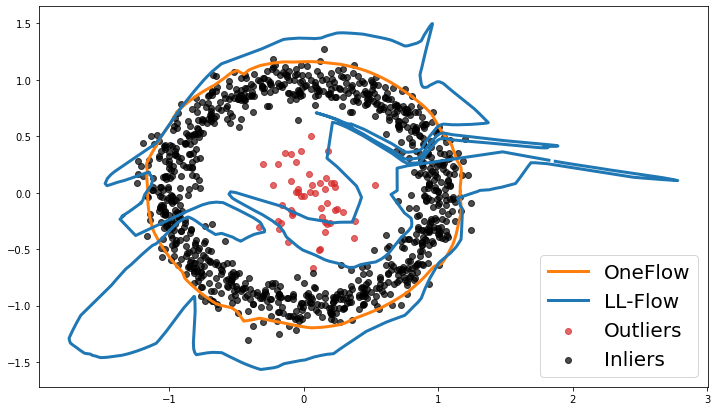}
    \caption{Comparing bounding regions constructed by the proposed \our{} with a flow-based density model (\flow{}).}
    \label{fig:plots_2d}
\end{figure*}

At first glance, both flow models give similar results in most cases, see Figure \ref{fig:plots_2d}. However, a closer inspection reveals that the decision boundaries created by \our{} are smoother and shorter than the ones resulted from \flow{}. While minimizing the area of the bounding region should lead to a more accurate description of the nominal class, minimizing the length of the decision boundary reduces the model complexity. Making an analogy with typical supervised models, smooth, short, and simple decision boundaries usually increases the generalization performance of the model to unseen examples. To confirm this observation, we calculate the volume of the bounding region and the length of the corresponding decision boundary, see Table \ref{tab:r}. It is evident that these quantities are smaller in the case of \our{}. 

A notable difference between both models can be seen in a doughnut shape distribution. \flow{} is trying to remove anomalies from the centre of the doughnut, however is too rigid, which causes the long and sharp decision boundary. On the other hand, \our{} optimizes the length of the decision boundary and treats points on the doughnut's outer edge as outliers. This behavior is typical for OneFlow’s logic, because our model is not interested in density estimation.

\begin{table}[t]
\caption{Comparing the length of decision boundaries and the volume of bounding regions constructed by \our{} and \flow{} for 2D examples (lower is better).}\label{tab:r}
\centering
\begin{tabular}{l l l l l}
\toprule
\multirow{2}{*}{Dataset} & \multicolumn{2}{c}{Length} & \multicolumn{2}{c}{Volume}\\
 & \our{} & \flow{} & \our{} & \flow{} \\
\midrule
Two Moons & 65.401 & 84.015 & 31.410 & 41.533 \\
Big Uniform & 52.015 & 68.076 & 182.702 & 184.431 \\
Two Blobs & 26.739 & 36.863 & 16.749 & 21.058 \\
Doughnut & 7.482 & 27.205 & 4.316 & 5.435 \\
Diverse Blobs & 19.754 & 26.685 & 25.661 & 25.823 \\
\bottomrule
\end{tabular}
\end{table}

Another example, in which the behavior of \our{} and \flow{} is different, is shown in Figure \ref{fig:sat}, where a dataset consists of two diverse blobs -- the one with 98\% of data and the second with remaining 2\% of data. While \flow{} focuses on the whole data and considers a few examples from the smaller blob as nominal data, \our{} directly solves a one-class problem and deems the whole smaller blob as anomalies. It shows that \our{} is not very sensitive to the structure and the distribution of anomalies, because they are automatically ignored in a training phase. On the other hand, \flow{} fits a prior density to the whole data, and, in consequence, a distribution of anomalies has an influence on the final results. Analogical behavior of both models was observed when we changed the proportions of clusters. Indeed, \our{} always deemed smaller cluster as anomalies if it contains at most 5\% of the whole data (larger clusters cannot be considered as anomalies in practice). In contrast, \flow{} could not separate the small clusters from nominal data in this case.

\subsection{Analysis of parameter $\alpha$} 

\begin{figure*}[t]
  \centering
  \includegraphics[width=0.48\textwidth]{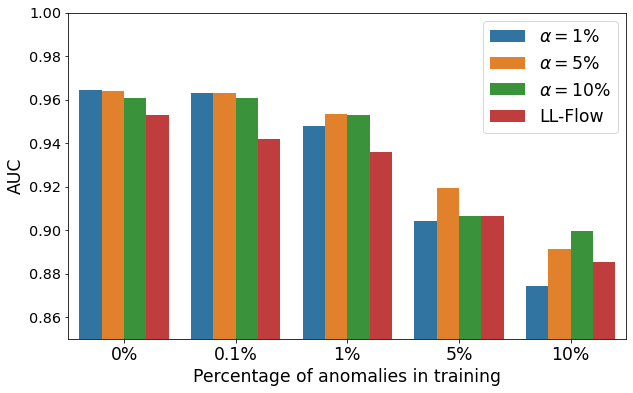}  \includegraphics[width=0.48\textwidth]{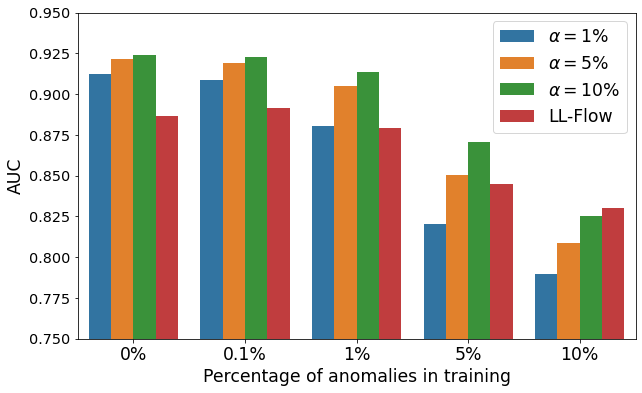}
  \caption{Evaluation of \our{} with three levels of $\alpha=1\%,5\%,10\%$ on MNIST (left) and Fashion-MNIST (right), in which nominal class is corrupted with anomalies at training time.} \label{fig:noisymnist}
\end{figure*}

Previous experiments were performed for \our{} with $\alpha=5\%$. A natural question is: what is the influence of $\alpha$ on the behavior of \our{}?

To partially answer this question, we corrupt a training nominal data with anomalies. We consider 5 noise levels with: 0\%, 0.1\%, 1\%, 5\% and 10\% of anomalies in a training set. In each case, we run \our{} with $\alpha = 1\%, 5\%, 10\%$. Evaluation on test set remains exactly the same as before. We take into account MNIST and Fashion-MNIST datasets. For a comparison, we also included LL-Flow.

It is clear from Figure \ref{fig:noisymnist} that the performance of \our{} slightly deteriorates as the number of anomalies in training set increases regardless of the value of $\alpha$. Moreover, the model with a high value of $\alpha$ is able to deal with a large number of anomalies in training better than the model with small $\alpha$. Indeed, \our{} with $\alpha=10\%$ leaves approximately $10\%$ of data outside the bounding region and thus can still provide a good description of nominal data as long as the number of anomalies does not exceed $10\%$. Moreover on almost all settings \our{} outperforms LL-Flow, being worse only in case of Fashion-MNIST with 10\% anomalies in the training dataset.

Another observation is that \our{} with small $\alpha$ works better for MNIST than for Fashion-MNIST when the number of anomalies is low (less than $1\%$). It may be explained by the fact that MNIST is a relatively simple dataset and almost all examples from each class are similar. In consequence, \our{} with $\alpha=1\%$ performs better than with $\alpha=10\%$ for negligible amount of anomalies in training. On the other hand, the variation in each class of Fashion-MNIST is greater (in the noiseless case, AUC for Fashion-MNIST is 4 percentage points lower than for MNIST) so the bounding region created for $\alpha = 1\%$ is too loose. In the test phase, such a bounding region may contain too many anomalies. 

The above analysis suggests that $\alpha$ should be large if the underlying anomaly detection task is hard or we have many anomalies in training. Otherwise, we should keep $\alpha$ low.

\subsection{Analysis of training epochs number}

\begin{figure*}[h!]
  \centering
  \includegraphics[width=0.48\textwidth]{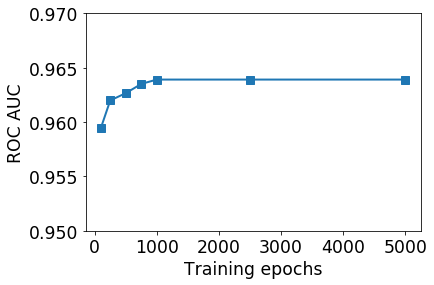}
  \includegraphics[width=0.48\textwidth]{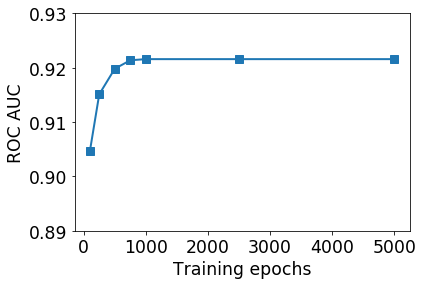}
  \caption{Evaluation of \our{} on MNIST (left) and Fashion-MNIST (right) with seven different numbers of training epochs: 100, 250, 500, 750, 1000, 2500, 5000.} \label{fig:epochsmnist}
\end{figure*}

In the following subsection we show scores obtained by \our{} using different number of training epochs. It is evident from Figure~\ref{fig:epochsmnist} that 1000 epochs that we used in our benchmarks provides good trade off between training time and obtained performance. Indeed for both MNIST as well as Fashion-MNIST datasets obtained ROC AUC grows until it reaches 1000 epochs and then flats out.

\subsection{Analysis of backbone model}

In our experiments we used a Nice-based~\cite{dinh2014nice} architecture as our backbone flow model. In order to check whether the change of backbone would improve the \our{} results, we replaced Nice with Real-NVP~\cite{dinh2016density} and Glow~\cite{kingma2018glow} architectures, and tested the obtained models on MNIST and Fashion-MNIST datasets.

The results presented in Table~\ref{tab:backbone} show that changing the Nice architecture to the more sophisticated ones significantly worsens the performance of \our{}. This could be caused by the fact that these models are too complex for the task of outlier detection on MNIST and Fashion-MNIST. Indeed, \our{} trained with Nice backbone obtained results close to the state-of-the-art on this task. Considering this and the fact that Real-NVP and Glow are designed to operate on image data, we decided to use Nice as a default backbone model in \our{}. More detailed results could be found in Appendix~\ref{app:detailed_results} in Tables~\ref{tab:mnist} and \ref{tab:fashion} (models are signed as OF-RNVP and OF-Glow).

\begin{table}[h]
\caption{Comparison of the flow backbone model for \our{}.
Real-NVP and Glow are more sophisticated flow architectures designed to work on images.}\label{tab:backbone}
\centering
\begin{tabular}{l c c}
\toprule
{} & MNIST & Fashion-MNIST \\
\midrule
\our{}   & .964 & .922 \\
\midrule
Real-NVP & .683 & .769 \\
Glow     & .895 & .775 \\
\bottomrule
\end{tabular}
\end{table}

\section{Conclusion}

The paper introduced \our{}, which realizes a well-known one-class paradigm using deep learning tools. Making use of a flow-based model and a Bernstein quantile estimator, we find a minimal volume bounding region for a given percentage of data. On the one hand, the constructed bounding region does not depend on the structure of outliers as in the density-based models, while, on the other hand, the bounding region is given by an explicit parametric form. Experimental results demonstrate that \our{} presents state-of-the-art performance.

\section*{Acknowledgments}
The work of L. Maziarka was supported by the National Science Centre (Poland) grant no. 2018/31/B/ST6/00993. The work of M. Śmieja, Ł. Struski were supported by the Foundation for Polish Science co-financed by the European Union under the European Regional Development Fund in the POIR.04.04.00-00-14DE/18-00 project is carried out within the Team-Net programme. The work of J. Tabor was supported by the National Science Centre (Poland) Grant No. 2017/25/B/ST6/01271. The work of P. Spurek was supported by the National Science Centre (Poland) Grant No. 2019/33/B/ST6/00894. 

\ifCLASSOPTIONcaptionsoff
  \newpage
\fi

\bibliographystyle{IEEEtran}
\bibliography{paper}

\appendices

\section{Notations} \label{app:notation}

\begin{itemize}
    \item $g$ -- a density in $\R^D$ producing data,
    \item $X \subset \R^D$ -- a sample generated by density $g$,
    \item $(x_1,\ldots,x_n) \subset X$ -- a mini-batch,
    \item $\alpha \in (0,1)$ -- a $p$-value, which determines the percentage of possible outliers in a dataset,
    \item $U \subset \R^D$ -- a $(1-\alpha)$-bounding region (i.e. $\int_U g(x) \, dx=1-\alpha$),
    \item $s_\alpha$ -- the Bernstein estimation of $( 1 - \alpha )$-quantile based on a sample $s_1,
    \ldots,s_n$.
    \item $f: \R^D \to \R^D$ -- a transformation given by a neural network.
    \item $f_\theta: \R^D \to \R^D$ -- a transformation given by a neural network with weights $\theta$.
    \item $\det d f(x)$ -- the Jacobian determinant of neural network $f$.
    \item $B(0,r)$ -- a ball centered at $0$ with radius $r$.
\end{itemize}

\section{Experimental setting} \label{app:exp}

In all our experiments, OneFlow and LL-Flow are implemented using the architecture of NICE flow model~\cite{dinh2014nice}, with the following hyperparameters:

\subsection{2D datasets}
\begin{itemize}
    \item Number of flow layers: 4
    \item Number of coupling layers: 4
    \item Hidden dimension: 16
    \item Number of epochs: 1000
    \item Batch size: 1000
    \item Learning rate: 0.001
\end{itemize}

\subsection{PIDForest benchmark}
\begin{itemize}
    \item Number of flow layers: 2
    \item Number of coupling layers: 6
    \item Hidden dimension: 64
    \item Number of epochs: 2000
    \item Batch size: 1000
    \item Learning rate: 0.001
\end{itemize}

\subsection{Other anomaly detection and image datasets}
\begin{itemize}
    \item Number of flow layers: 4
    \item Number of coupling layers: 4
    \item Hidden dimension: 256
    \item Number of epochs: 1000
    \item Batch size: 1000
    \item Learning rate: 0.001
\end{itemize}

\section{Description of comparative algorithms} \label{app:desc}

Below, we give a brief description of algorithms used in the experimental section:
\begin{itemize}
    \item {\bf One-class SVM (OC-SVM)} \cite{scholkopf2001estimating}. It is a traditional kernel-based one-class classifier (we use the RBF kernel).
    \item {\bf Deep structured energy-based models (DSEBM)} \cite{zhai2016deep}. This model employs a deterministic deep neural network to output the energy function, such as negative log-likelihood, which is used to form the density of nominal data. 
    \item {\bf Deep autoencoding Gaussian mixture model (DAGMM)} \cite{zong2018deep}. It combines deep autoencoder with a Gaussian mixture estimation network to output the joint density of the latent representations and some reconstruction features from the autoencoder.
    \item {\bf Four variants of MQT -- multivariate quantile map (NLL, TQM$_1$, TQM$_2$, TQM$_\infty$)} \cite{wang2019multivariate}. MQT is a general model, which thresholds a given score function to describe nominal data. As a score function, we use negative log-likelihood (NLL) as well as 1-norm, 2-norm, and infinity norm of quantile (TQM).
    \item {\bf Deep Support Vector Data Description (DSVDD)} \cite{ruff2018deep}. It is an implementation of SVDD using deep neural networks, which penalizes data points that lie outside the hypersphere.
    \item {\bf Two variants of log-likelihood flow model (\flow{} and \flowdet{})}. These are generative flow models based on log-likelihood function that mimic \our{} and \ourdet{}, respectively.
    \item {\bf Geometric transformation (GT)} \cite{golan2018deep}. It uses a multi-class model to discriminate between dozens of geometric transformations applied to examples from the nominal class. The scoring function is the conditional probability of the softmax responses of the classifier given the geometric transformations.
    \item {\bf Variational autoencoder (VAE)} \cite{kingma2013auto}. The evidence lower bound is used as the scoring function.
    \item {\bf Denoising autoencoder (DAE)} \cite{vincent2008extracting}. The reconstruction error is used as the scoring function.
    \item {\bf Generative probabilistic novelty detection (GPND)} \cite{pidhorskyi2018generative}. GPND, based on adversarial autoencoders, uses data density as the scoring function. Density is approximated by linearizing the manifold that nominal data resides on.
    \item {\bf Latent space autoregression (LSA)} \cite{abati2019latent}. A parametric autoregressive model is used to estimate the density of the latent representation generated by a deep autoencoder. The sum of the normalized reconstruction error and log-likelihood is used as the scoring function.
    \item {\bf PIDForest} \cite{gopalan2019pidforest}. A random forest based algorithm that finds outliers based on the value of PIDScore, which is a geometric anomaly measure for a point. The scoring function measures the minimum density of data points over all subcubes containing the point.
    \item {\bf IsolationForest (iForest)} \cite{liu2012isolation}. A random forest based algorithm. iForest isolates observations by randomly selecting a feature and a split value. The number of splittings required to isolate a sample is the scoring function.
    \item {\bf Robust Random Cut Forest (RRCF)} \cite{guha2016robust}. An outlier detection algorithm that is based on a binary search tree. The scoring function is measured by its collusive displacement (CoDisp): if including a new point significantly changes the model complexity (i.e. bit depth), then that point is more likely to be an outlier.
    \item {\bf Local Outlier Factor (LOF)} \cite{breunig2000lof}. The scoring function is based on measuring the local deviation of a given data point with respect to its k-nearest neighbours.
    \item {\bf k-Nearest Neighbour (kNN)}. The distance from k-nearest neighbours is considered as the scoring function.
    \item {\bf Principal Component Analysis (PCA)} \cite{wold1987principal}. The scoring function is calculated as the distance from the axes in feature space.
\end{itemize}

\section{Detailed results for Experiments section} \label{app:detailed_results}

In Tables \ref{tab:pid_roc} and \ref{tab:pid_f1}, we present detailed results obtained for PIDForest benchmark.

\begin{table*}[tbh!]
\caption{AUC obtained on PIDForest benchmarks. The results of comparative methods (except LL-Flow and LL-Flow-Gen) are taken from \cite{gopalan2019pidforest}. Experiments were repeated $3$ times, we present the mean as well as standard deviation of obtained scores.} \label{tab:pid_roc}
\resizebox{\textwidth}{!}{%
\centering
\begin{tabular}{l | ccccccccccc }
\toprule
Data set & PIDForest & iForest  & RRCF & LOF & SVM &  kNN & PCA & LL-Flow & LL-Flow-Gen & OneFlow & OneFlow-Gen \\
\midrule
Thyroid &  .876 $\pm$ .013 & .819 $\pm$ .013 & .739 $\pm$ .004 & .737 & .547 & .751 & .673 & .848 $\pm$ .012 & .856 $\pm$ .012 & .605 $\pm$ .147 & .691 $\pm$ .078 \\ 
Mammo. &   .840 $\pm$ .010 & .862 $\pm$ .008 & .830 $\pm$ .002 & .720 & .872 & .839 & .886 & .856 $\pm$ .001 & .871 $\pm$ .007 & .710 $\pm$ .039 & .715 $\pm$ .057 \\ 
Seismic &  .733 $\pm$ .006 & .698 $\pm$ .004 & .701 $\pm$ .004 & .553 & .601 & .740 & .682 & .705 $\pm$ .007 & .708 $\pm$ .012 & .709 $\pm$ .004 & .719 $\pm$ .011  \\
Satimage & .987 $\pm$ .001 & .994 $\pm$ .001 & .991 $\pm$ .002 & .540 & .421 & .936 & .977 & .911 $\pm$ .007 & .982 $\pm$ .010 & .997 $\pm$ .000 & .998 $\pm$ .001 \\ 
Vowels &   .741 $\pm$ .008 & .736 $\pm$ .026 & .813 $\pm$ .007 & .943 & .778 & .975 & .606 & .837 $\pm$ .019 & .670 $\pm$ .012 & .846 $\pm$ .036 & .839 $\pm$ .056 \\ 
Musk &     1.00 $\pm$ .000 & .998 $\pm$ .003 & .998 $\pm$ .000 & .416 & .573 & .373 & 1.00 & .968 $\pm$ .005 & .989 $\pm$ .006 & .688 $\pm$ .011 & .946 $\pm$ .092 \\ 
http &     .986 $\pm$ .004 & 1.00 $\pm$ .000 & .993 $\pm$ .000 & .353 & .994 & .231 & .996 & .992 $\pm$ .001 & .991 $\pm$ .005 & .994 $\pm$ .000 & .994 $\pm$ .000 \\ 
smtp &     .923 $\pm$ .003 & .908 $\pm$ .003 & .886 $\pm$ .017 & .905 & .841 & .895 & .823 & .878 $\pm$ .016 & .900 $\pm$ .005 & .857 $\pm$ .016 & .841 $\pm$ .011 \\ 
\bottomrule
\end{tabular}
}
\end{table*}

\begin{table*}[tbh!]
\caption{F1 obtained on PIDForest benchmarks. Experiments were repeated $3$ times, we present the mean as well as standard deviation of obtained scores.} \label{tab:pid_f1}
\resizebox{\textwidth}{!}{%
\centering
\begin{tabular}{l | ccccccccccc }
\toprule
Data set & PIDForest & iForest  & RRCF & LOF & SVM &  kNN & PCA & LL-Flow & LL-Flow-Gen & OneFlow & OneFlow-Gen \\ 
\midrule
Thyroid  & .265 $\pm$ .054 & .311 $\pm$ .018 & .263 $\pm$ .008 & .242 & .103 & .255 & .226 & .299 $\pm$ .010 & .328 $\pm$ .013 & .269 $\pm$ .024 & .312 $\pm$ .043 \\
Mammo.   & .245 $\pm$ .022 & .231 $\pm$ .025 & .226 $\pm$ .010 & .176 & .229 & .212 & .254 & .190 $\pm$ .017 & .243 $\pm$ .005 & .188 $\pm$ .010 & .211 $\pm$ .020 \\
Seismic  & .159 $\pm$ .014 & .139 $\pm$ .013 & .100 $\pm$ .004 & .053 & .000 & .140 & .153 & .140 $\pm$ .014 & .126 $\pm$ .018 & .164 $\pm$ .003 & .129 $\pm$ .024 \\
Satimage & .371 $\pm$ .002 & .377 $\pm$ .004 & .377 $\pm$ .006 & .099 & .011 & .210 & .354 & .145 $\pm$ .002 & .344 $\pm$ .052 & .381 $\pm$ .000 & .387 $\pm$ .004 \\
Vowels   & .205 $\pm$ .025 & .198 $\pm$ .039 & .189 $\pm$ .017 & .407 & .228 & .585 & .146 & .284 $\pm$ .022 & .135 $\pm$ .020 & .298 $\pm$ .062 & .309 $\pm$ .082 \\
Musk     & .774 $\pm$ .001 & .773 $\pm$ .000 & .759 $\pm$ .003 & .024 & .000 & .040 & .773 & .531 $\pm$ .030 & .661 $\pm$ .081 & .278 $\pm$ .045 & .632 $\pm$ .122 \\
http     & .145 $\pm$ .000 & .145 $\pm$ .000 & .145 $\pm$ .000 & .000 & .145 & .006 & .014 & .145 $\pm$ .000 & .145 $\pm$ .000 & .145 $\pm$ .000 & .145 $\pm$ .000 \\
smtp     & .009 $\pm$ .000 & .009 $\pm$ .000 & .009 $\pm$ .000 & .009 & .009 & .009 & .009 & .009 $\pm$ .000 & .009 $\pm$ .000 & .009 $\pm$ .000 & .009 $\pm$ .012 \\ 
\bottomrule
\end{tabular}
}
\end{table*}

In Tables \ref{tab:mnist} and \ref{tab:fashion}, we present detailed results obtained for MNIST and Fashion-MNIST datasets.

\begin{table*}[tbh!]
\setlength{\tabcolsep}{2pt}
\caption{AUC obtained on MNIST dataset. The results of comparative methods (except DSVDD, LL-Flow and LL-Flow-Gen) are taken from \cite{wang2019multivariate}.} \label{tab:mnist}
\resizebox{\textwidth}{!}{%
\centering
\begin{tabular}{c | ccccccccccccccccc }
\toprule
Class & OCSVM  & VAE & DAE & LSA & GT & DAGM & GPND & DSEBM & DSVDD & NLL & TQM & LL-Flow & LL-Flow-Gen & OneFlow & OneFlow-Gen & OF-RNVP & OF-Glow\\
\midrule
0 & .995 & .985 & .982 & .998 & .982 & .500 & .999 & .320 & .980 & .995 & .993 & .990 & .990 & .994 & .995 & .600 & .934\\
1 & .999 & .997 & .998 & .999 & .893 & .766 & .999 & .987 & .997 & .998 & .997 & .998 & .998 & .999 & .999 & .997 & .997\\
2 & .926 & .943 & .936 & .923 & .993 & .326 & .980 & .482 & .917 & .953 & .948 & .930 & .927 & .945 & .944 & .523 & .913\\
3 & .936 & .916 & .929 & .974 & .987 & .319 & .968 & .753 & .919 & .963 & .957 & .952 & .936 & .967 & .972 & .683 & .896\\
4 & .967 & .945 & .940 & .955 & .993 & .368 & .980 & .696 & .949 & .966 & .963 & .937 & .941 & .954 & .949 & .712 & .909\\
5 & .955 & .929 & .928 & .966 & .994 & .490 & .987 & .727 & .885 & .962 & .960 & .961 & .949 & .973 & .974 & .652 & .828\\
6 & .987 & .977 & .982 & .992 & .999 & .515 & .998 & .954 & .983 & .992 & .990 & .984 & .985 & .990 & .991 & .591 & .911\\
7 & .966 & .975 & .971 & .969 & .966 & .500 & .988 & .911 & .946 & .969 & .966 & .970 & .971 & .976 & .975 & .824 & .948\\
8 & .903 & .864 & .857 & .935 & .974 & .467 & .929 & .536 & .939 & .955 & .951 & .841 & .851 & .870 & .874 & .545 & .729\\
9 & .962 & .967 & .974 & .969 & .993 & .813 & .993 & .905 & .965 & .977 & .976 & .965 & .970 & .972 & .972 & .701 & .883\\
\midrule
avg & .960 & .950 &  .950 &  .968 &  .977 &  .508 &  .982 &  .727 & .948 &  .973 &  .970 & .953 & .952 & .964 & .964 & .683 & .895\\
\bottomrule
\end{tabular}
}
\end{table*}

\begin{table*}[tbh!]
\setlength{\tabcolsep}{2pt}
\caption{AUC obtained on Fashion-MNIST dataset. The results of comparative methods (except DSVDD, LL-Flow and LL-Flow-Gen) are taken from \cite{wang2019multivariate}.} \label{tab:fashion}
\resizebox{\textwidth}{!}{%
\centering
\begin{tabular}{l | ccccccccccccccccc }
\toprule
Class & OCSVM & VAE & DAE & LSA & GT & DAGM & GPND & DSEBM & DSVDD & NLL & TQM & LL-Flow & LL-Flow-Gen & OneFlow & OneFlow-Gen & OF-RNVP & OF-Glow\\
\midrule
0 & .919 & .874 & .867 & .916 & .903 & .303 & .917 & .891 & .791 & .922 & .917 & .914 & .901 & .917 & .918 & .766 & .741\\
1 & .990 & .977 & .978 & .983 & .993 & .311 & .983 & .560 & .940 & .958 & .950 & .989 & .989 & .989 & .989 & .917 & .962\\
2 & .894 & .816 & .808 & .878 & .927 & .475 & .878 & .861 & .830 & .899 & .899 & .614 & .876 & .893 & .894 & .793 & .721\\
3 & .942 & .912 & .914 & .923 & .906 & .481 & .945 & .903 & .829 & .930 & .925 & .930 & .934 & .930 & .932 & .792 & .748\\
4 & .907 & .872 & .865 & .897 & .907 & .499 & .906 & .884 & .870 & .922 & .921 & .841 & .582 & .903 & .903 & .724 & .686\\
5 & .918 & .916 & .921 & .907 & .954 & .413 & .924 & .859 & .803 & .894 & .884 & .904 & .909 & .902 & .901 & .664 & .766\\
6 & .834 & .738 & .738 & .841 & .832 & .420 & .785 & .782 & .749 & .844 & .838 & .828 & .804 & .820 & .820 & .683 & .704\\
7 & .988 & .976 & .977 & .977 & .981 & .374 & .984 & .981 & .942 & .980 & .972 & .989 & .989 & .989 & .989 & .950 & .952\\
8 & .903 & .795 & .782 & .910 & .976 & .518 & .916 & .865 & .791 & .945 & .943 & .888 & .873 & .893 & .890 & .521 & .653\\
9 & .982 & .965 & .963 & .984 & .994 & .378 & .876 & .967 & .932 & .983 & .983 & .968 & .975 & .979 & .980 & .881 & .817\\
\midrule
avg & .928 & .884 & .881 & .922 & .937 & .472 & .911 & .855 & .847 & .928 & .923 & .887 & .883 & .922 & .922 & .769 & .775 \\
\bottomrule
\end{tabular}
}
\end{table*}

To illustrate previous results, in Figures \ref{fig:samples_mnist}, \ref{fig:samples_mnist2}, \ref{fig:samples_fashion}, \ref{fig:samples_fashion2} we show:
\begin{itemize}
\item examples from nominal class, which lie closest to the center of bounding hypersphere (1st column),
\item examples from nominal class, which are farthest from the center of bounding hypersphere (2nd column),
\item anomalies, which lie closest to the center of bounding hypersphere (3rd column),
\item anomalies, which are farthest from the center of bounding hypersphere (4th column),
\end{itemize}
Interestingly, the examples from the class "1" are frequently localized close to the center of bounding hypersphere. It partially explains the behavior observed in previous heatmaps. Another observation is that the examples closest to the center are very regular (first column), while examples farthest from the center look worse, and humans can make mistakes in classifying these images.

\begin{figure*}[tbh!] 
    \centering
        \includegraphics[width=0.95\textwidth]{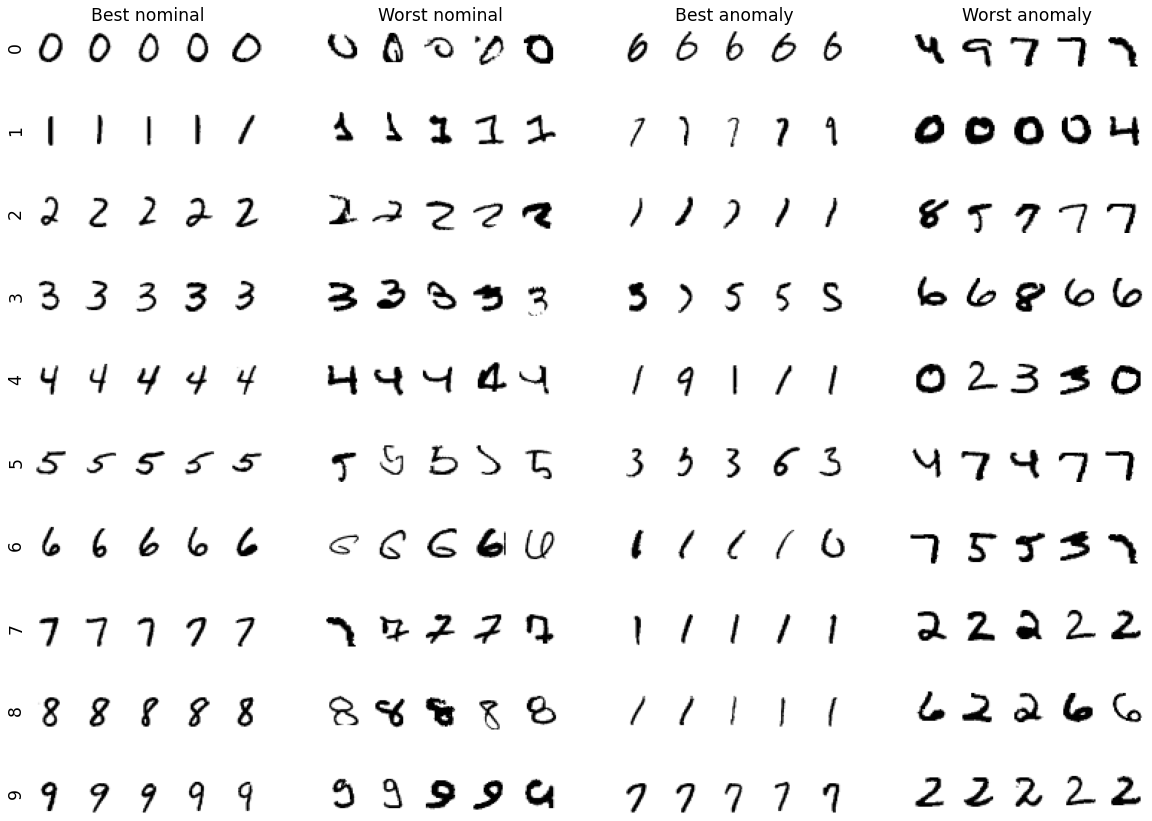}
    \caption{Samples for MNIST dataset for \our{} model.}
    \label{fig:samples_mnist}
\end{figure*}

\begin{figure*}[tbh!] 
    \centering
        \includegraphics[width=0.95\textwidth]{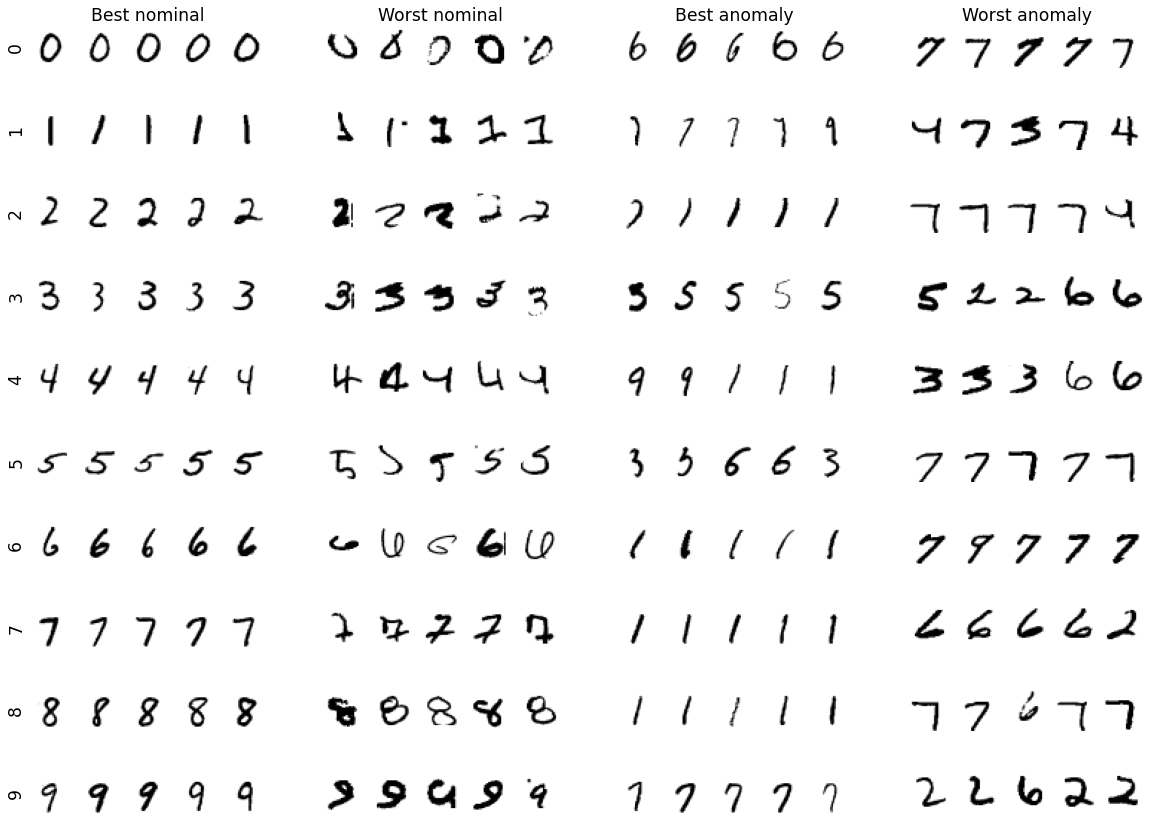}
    \caption{Samples for MNIST dataset for \ourdet{} model.}
    \label{fig:samples_mnist2}
\end{figure*}

\begin{figure*}[tbh!] 
    \centering
        \includegraphics[width=0.95\textwidth]{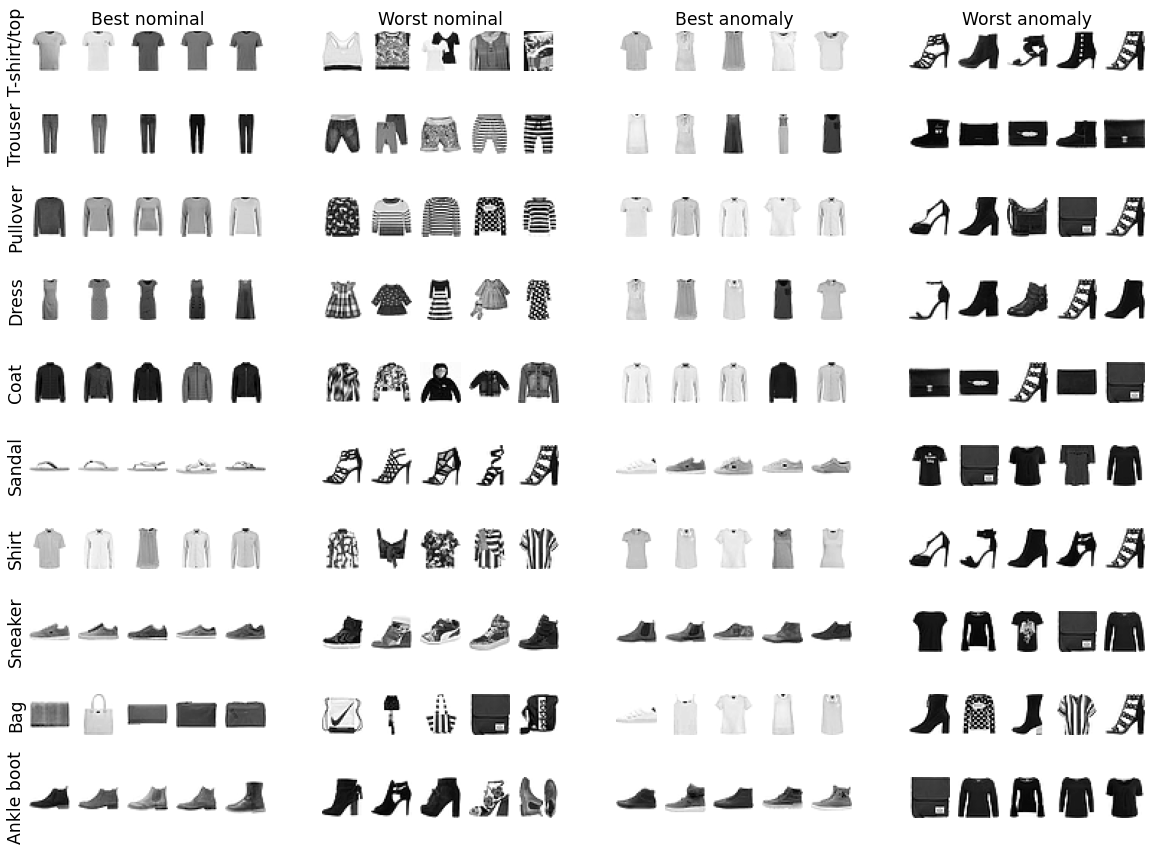}
    \caption{Samples for Fashion-MNIST dataset for \our{} model.}
    \label{fig:samples_fashion}
\end{figure*}

\begin{figure*}[tbh!] 
    \centering
        \includegraphics[width=0.95\textwidth]{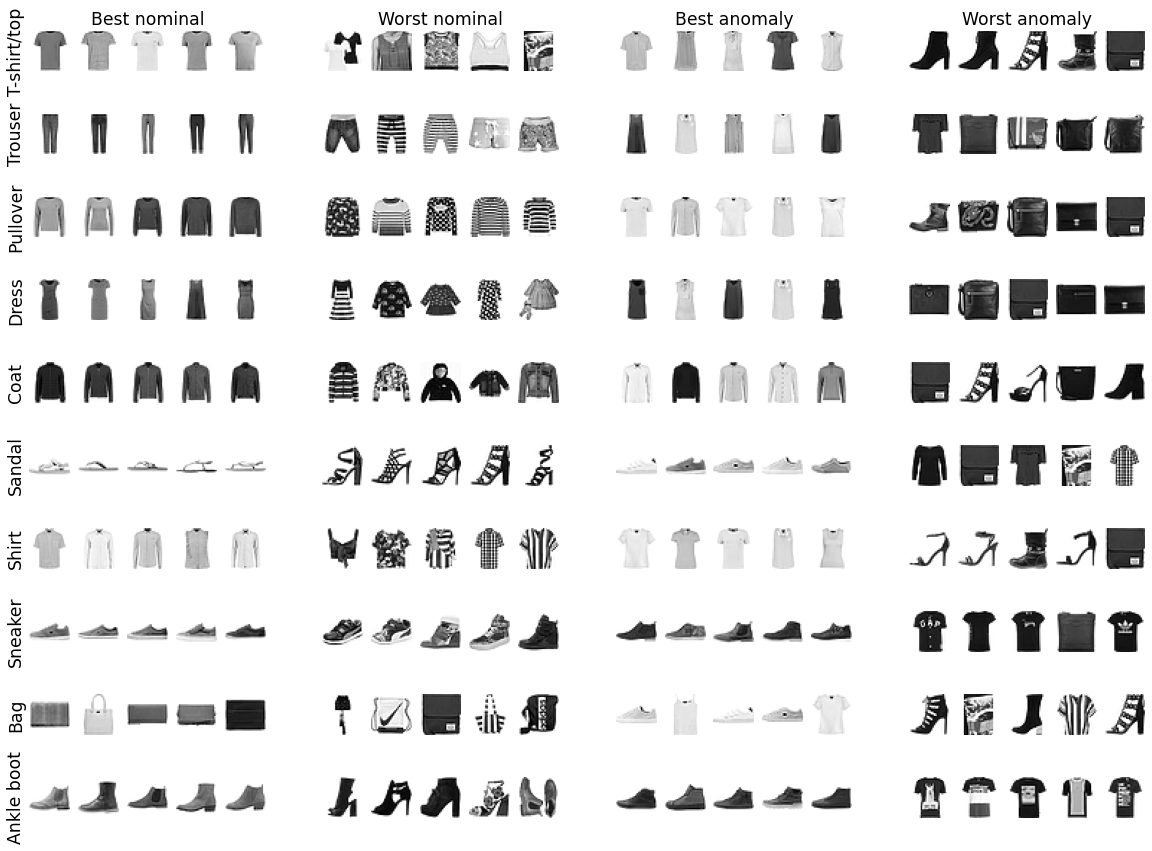}
    \caption{Samples for Fashion-MNIST dataset for \ourdet{} model.}
    \label{fig:samples_fashion2}
\end{figure*}

\end{document}